\theoremstyle{thmstyleone}%
\newtheorem{theorem}{Theorem}
\theoremstyle{thmstyletwo}%
\theoremstyle{thmstylethree}%
\begin{document}

\title[Article Title]{Cloud Model Characteristic Function Auto-Encoder: Integrating Cloud Model Theory with MMD Regularization for Enhanced Generative Modeling}


\author[1,2]{\fnm{Biao} \sur{Hu}}

\author*[1]{\fnm{Guoyin} \sur{Wang}}

\affil[1]{\orgdiv{School of Computer Science and Technology}, \orgname{Chongqing University of Posts and Telecommunications}, \orgaddress{\city{Chongqing}, \postcode{400065}, \country{China}}}

\affil[2]{\orgdiv{School of Big Data and Internet of Things}, \orgname{Chongqing Vocational Institute of Engineering}, \orgaddress{\city{Chongqing}, \postcode{402260}, \country{China}}}


\abstract{We introduce Cloud Model Characteristic Function Auto-Encoder (CMCFAE), a novel generative model that integrates the cloud model into the Wasserstein Auto-Encoder (WAE) framework. By leveraging the characteristic functions of the cloud model to regularize the latent space, our approach enables more accurate modeling of complex data distributions. Unlike conventional methods that rely on a standard Gaussian prior and traditional divergence measures, our method employs a cloud model prior, providing a more flexible and realistic representation of the latent space, thus mitigating the homogenization observed in reconstructed samples. We derive the characteristic function of the cloud model and propose a corresponding regularizer within the WAE framework. Extensive quantitative and qualitative evaluations on MNIST, FashionMNIST, CIFAR-10, and CelebA demonstrate that CMCFAE outperforms existing models in terms of reconstruction quality, latent space structuring, and sample diversity. This work not only establishes a novel integration of cloud model theory with MMD-based regularization but also offers a promising new perspective for enhancing autoencoder-based generative models.}

\keywords{Cloud Model, Auto-Encoder, Generative Model, MMD Regularization}



\maketitle

\section{Introduction}\label{sec:intro}

\begin{figure*}[!htb]
\begin{center}
\centerline{\includegraphics[width=\textwidth]{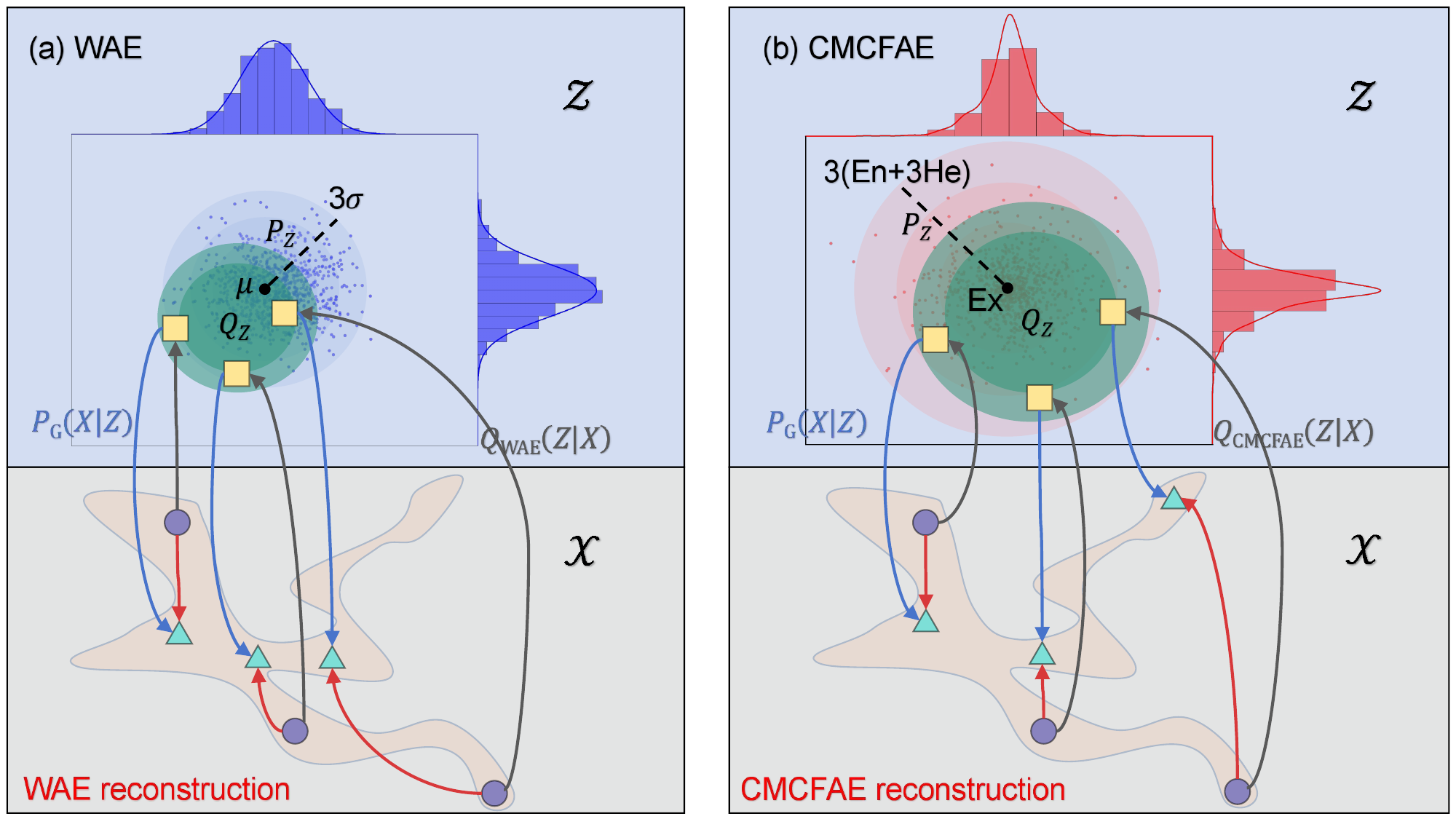}}
\caption{Comparison of the General WAE Framework and CMCFAE. CMCFAE is formulated within the general WAE framework, adhering to its standard configuration, where the optimization objective comprises two components: the reconstruction cost and the regularization term. The primary distinction between CMCFAE and WAE lies in how the regularization term is computed. As illustrated in Figure (a), WAE adopts a standard Gaussian distribution as the prior. The regularization term ensures that the aggregated posterior $Q_Z := \int Q(Z|X) dP_X$ aligns with the prior $P_Z$. This prior-induced constraint may result in sample homogenization during the reconstruction process. As depicted in Figure (a), the original samples (purple spheres) exhibit substantial diversity, whereas the reconstructed samples (green triangles) show diminished variation, leading to homogenized reconstruction results. As illustrated in Figure (b), CMCFAE employs the cloud model as the prior. Leveraging the cloud model’s more flexible sampling space (adjustable via $He$), CMCFAE achieves reconstructions of comparable quality even when the original samples exhibit considerable diversity.}
\label{fig:CMCFAE}
\end{center}
\end{figure*}

Generative models have made significant progress in learning complex, high-dimensional distributions. They are capable of simulating high-dimensional, intricate datasets, with the goal of generating samples that closely resemble the input data \cite{kingma2013auto, radford2015unsupervised, van2016pixel, arjovsky2017wasserstein, wang2023slot}. Variational Autoencoders (VAEs) model high-dimensional data probabilistically and are founded on elegant theoretical principles. In VAEs, the Kullback-Leibler (KL) divergence is employed in the latent space to quantify the distance between the latent variable distribution and the target distribution; this divergence is minimized via variational inference techniques. Since their introduction \cite{kingma2013auto}, VAEs have become a popular framework for generative modeling.

Numerous improvements to VAEs have been proposed, primarily focusing on image generation tasks. However, challenges remain concerning the quality of generated samples, which are frequently blurry. One potential reason for the discrepancy between generated and real samples is the overly simplistic prior distribution \cite{nalisnick2016stick, dai2023optimized} or posterior \cite{rezende2015variational}, along with the excessive regularization imposed by the KL divergence used to compare the latent variable distribution with the target distribution \cite{higgins2017beta}. The prior plays a critical role in VAEs, as it largely determines model performance \cite{johnson2016composing, hoffman2016elbo}. In VAEs, a simple prior—typically a Gaussian prior—is commonly adopted. Some approaches employ Gaussian Mixture Models (GMMs) as the prior to enhance model performance \cite{dilokthanakul2016deep, tomczak2016improving}. However, these methods predominantly rely on Monte Carlo simulations, which can adversely impact training stability when the sample size is limited.

The aforementioned efforts represent improvements within the framework of VAEs. In contrast, the development of the WAE-MMD models \cite{tolstikhin2017wasserstein} introduces the Wasserstein metric, which relaxes the constraints imposed by variational methods and facilitates the generation of higher-quality images. \cite{2018Sliced} introduced the Sliced-Wasserstein Auto-Encoder (SWAE), which incorporates the Sliced-Wasserstein distance, thereby significantly accelerating its computation. \cite{knop2020cramer} introduced the Cramer-Wold distance between distributions, which is derived from the MMD distance and a novel Cramer-Wold kernel, and features a cost function with a closed-form analytical expression. However, the Cramer-Wold kernel is limited to measuring the discrepancy between a sample and a mixture of radial Gaussian distributions, which imposes certain constraints. \cite{bruck2024generative} was the first to describe the Maximum-Mean-Discrepancy (MMD) metric from the perspective of characteristic functions, directly incorporating the characteristic function into the model's loss function to provide a more detailed characterization of distributional differences. Furthermore, owing to the generality of characteristic functions, this approach mitigates the limitations inherent in mixtures of radial Gaussian distributions.

The main contributions of this paper include integrating the cloud model into the WAE framework, deriving its characteristic function, and proposing a regularizer based on the cloud model's characteristic function. The cloud model is a probabilistic model renowned for its robust data representation capabilities. When employed as a prior in VAEs, it can expand the latent space, thereby enhancing the likelihood of capturing a broader range of features during the sampling process \cite{dai2023optimized, liu2023cloud}.

\section{Related Work}\label{sec:Related Work}
Auto-Encoders (AEs), particularly Variational Auto-Encoders (VAEs), have been extensively studied for their effectiveness in learning latent representations of data \citep{kingma2013auto, higgins2017beta}. Among these, Wasserstein Auto-Encoders (WAEs) \citep{tolstikhin2017wasserstein} provide an alternative framework to VAEs, mitigating challenges associated with KL divergence and enhancing reconstruction quality through optimal transport theory.

\textbf{Wasserstein Auto-Encoders (WAEs)}. WAEs are a family of generative models in which the autoencoder utilizes stochastic gradient descent (SGD) to estimate and minimize the Wasserstein metric between the generative model $P_\theta(X)$ and the data distribution $P_{data}(X)$. Subsequent research has extended WAEs by integrating various divergence measures and distance metrics \citep{tolstikhin2017wasserstein, 2018Sliced, knop2020cramer, nakagawa2022gromov}. According to the theoretical analysis in \citep{bousquet2017optimal}, this family of generative models is formulated as a representation learning approach from the perspective of optimal transport (OT). The optimization objective of WAEs is equivalent to that of InfoVAE \citep{zhao2019infovae}, which learns variational autoencoder models by maximizing the mutual information of the probabilistic encoder.

\textbf{Maximum Mean Discrepancy (MMD)}. The Maximum Mean Discrepancy (MMD) \citep{gretton2006kernel} has become a widely adopted metric for measuring the divergence between probability distributions in machine learning. Unlike traditional measures such as the KL divergence, MMD leverages kernel functions to provide a non-parametric and flexible approach, making it particularly suitable for generative frameworks such as WAEs \citep{tolstikhin2017wasserstein} and GANs \citep{binkowski2018demystifying}. Additionally, MMD has been employed to regularize the training of diffusion models \citep{li2023error} and to fine-tune them for accelerated sampling \citep{aiello2024fast}.  

\textbf{Cloud Model (CM)}. The Cloud Model (CM) \citep{wang2014generic} is a mathematical framework extensively used in uncertainty representation and knowledge discovery. Previous research has predominantly focused on its qualitative properties and applications in fields such as data classification and uncertainty analysis \citep{wang2016cloud, xie2021novel, liu2023large}. However, the theoretical foundations of the CM face notable limitations. Specifically, its probability density function (PDF) lacks an analytical solution \citep{li2009new}, which impedes its precise mathematical characterization and limits its broader application in stochastic modeling. The absence of an analytical expression for the CM's PDF significantly constrains its use in generative modeling, where accurate probability representations are often essential. This challenge is particularly prominent in frameworks like WAEs, which require clear mathematical formulations for regularization terms such as MMD. As a result, integrating the CM into advanced generative models remains an unresolved issue. In this study, we tackle this longstanding challenge by deriving the characteristic function of the CM, offering an alternative mathematical representation that enables modeling its stochastic processes without relying on the intractable PDF. By utilizing these characteristic functions, we incorporate the CM into the WAE framework, demonstrating its ability to improve generative performance by capturing complex data distributions while preserving mathematical rigor.

\section{Methodology}\label{sec:Methodology}

In this section, we provide a concise overview of the Cloud Model (CM) theory, which is widely utilized for representing uncertainty and modeling stochastic processes. The Cloud Model integrates the strengths of fuzzy theory and probability theory, making it an effective tool for capturing uncertainty in diverse applications. However, the absence of an analytical probability density function (PDF) creates challenges when directly incorporating the CM into traditional models. To overcome this limitation, we derive the characteristic function for the CM, which offer a practical approach to representing the uncertainty inherent in the model. These characteristic function serve as an alternative representation of the stochastic processes governed by the CM and play a crucial role in integrating the CM into the WAE framework.

\subsection{Cloud Model}

\textbf{Cloud Model (CM)}, proposed by \citep{li2009new}, is a mathematical model that integrates fuzzy set theory and probability theory to represent uncertainty. As illustrated in Figure~\ref{CloudModel}, CM consists of three key components: 
\begin{itemize}
    \item \textbf{Expectation (Ex)}: Represents the central tendency or mean of the cloud model. 
    \item \textbf{Entropy (En)}: Quantifies the uncertainty, analogous to the standard deviation, indicating the spread of data. 
    \item \textbf{Hyper-Entropy (He)}: Refines the entropy by adjusting the distribution’s spread, offering a higher-order measure of uncertainty. 
\end{itemize} 
By adjusting the expectation and entropy, CM can model diverse types of uncertain distributions, making it particularly well-suited for handling uncertainty in generative models. Using these three parameters, CM samples can be generated through the Forward Cloud Generator (FCG) \citep{li2009new}. The specific generation algorithm is provided in Algorithm~\ref{alg:FCG}.  

\begin{figure}[!htb]
\begin{center}
\centerline{\includegraphics[width=8cm]{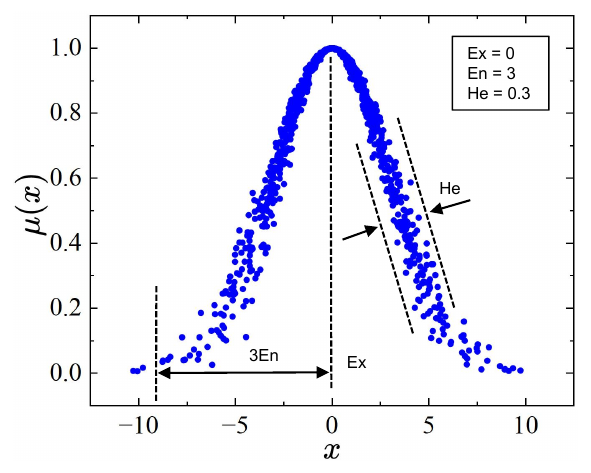}}
\caption{A schematic diagram of the cloud model is presented with parameters $Ex = 0$, $En = 3$, and $He = 0.3$. Here, $Ex$ represents the \textbf{mathematical expectation} of the random variable $X$, while $En$ quantifies its \textbf{uncertainty}, analogous to standard deviation. The parameter $He$ captures \textbf{entropy-based uncertainty}, which reflects the unevenness or dependencies within the random variable. Specifically, $He$ serves as a measure of deviation from a normal distribution, allowing the generalized normal distribution to better represent complex real-world data. It is important to note that this schematic does not depict the probability density function (PDF) of the random variable $X$. Instead, it illustrates the function $\mu(X)$, which highlights the practical significance of the three key numerical characteristics within the cloud model. }
\label{CloudModel}
\end{center}
\end{figure}

\begin{algorithm}[tb]
   \caption{Forward cloud generator: FCG(\(Ex\), $En$, \(En\), \(He\))}
   \label{alg:FCG}
\begin{algorithmic}
\STATE {\bfseries Input:} digital characteristics of CM: \(Ex\), \(En\), \(He\) and number of generated samples \(n\).
\STATE {\bfseries Output:} \(n\) samples \(x_i\) and their determinacy \(\mu(x_i)\)(\(i = 1,2,\dots,n\)).
\FOR{\(i=1\)  to \(n\)}
   \STATE Generate a normal random number \(s_i = R_N(En,He)\) with expectation \(En\) and variance \(He^{2}\).
   \STATE Generate a normal random number \(x_i = R_N(Ex, \vert s_i\vert)\) with expectation \(Ex\) and variance \(s_i^2\).
   \STATE Compute the certainty \(\mu(x_i) = \exp{\left(-\dfrac{(x_i-Ex)^2}{2s_i^2}\right)}\).
\ENDFOR
\STATE {\bfseries Return:} \(x_i\), \(\mu(x_i)\).
\end{algorithmic}
\end{algorithm}

Based on Algorithm~\ref{alg:FCG}, we can treat all the samples $x_i$ as realizations of a random variable $X$, and all the corresponding uncertainty values $En_i'$ as an intermediate random variable $S$, establishing a conditional probability relationship between them. Therefore, the probability density function (PDF) of the Cloud Model can be formulated as follows:  

First, the random variable $S$ follows a normal distribution with mean $En$ and variance $He^2$, given by:  

\begin{equation*}
\begin{aligned}
    f(s) = \dfrac{1}{\sqrt{2 \pi He^2}}\exp\left(-\dfrac{(s-En)^2}{2He^2}\right),
\end{aligned}
\end{equation*}
when $s = \sigma$, the random variable $X$ follows a Gaussian distribution with mean $Ex$ and variance $\sigma^2$. The conditional probability density function of $X$ is expressed as:  

\begin{equation*}
\begin{aligned}
    f(x|s=\sigma) = \dfrac{1}{\sqrt{2 \pi \sigma^2}}\exp\left(-\dfrac{(x-Ex)^2}{2\sigma^2}\right)
\end{aligned}
\end{equation*}

By applying the conditional probability density formula, the probability density function of the Cloud Model is derived as follows: 
\begin{equation} \label{eq:E1}
\begin{aligned}
    f(x) &= \int_{-\infty}^{+\infty} f(x|s=\sigma)f(\sigma){\rm d}\sigma
\end{aligned}
\end{equation}

Equation~\eqref{eq:E1} does not have a closed-form analytical solution, which prevents the Cloud Model from possessing a well-defined probability density function similar to traditional probability distributions. This lack of an explicit analytical PDF poses challenges for incorporating the Cloud Model into generative models, particularly in terms of optimization and data generation, as conventional PDF-based approaches cannot be directly applied.  

\subsection{Cloud Model Characteristic Function}

Due to the absence of an analytical probability density function (PDF) 
in the Cloud Model, conventional optimization methods, such as maximum likelihood estimation, cannot be directly employed for training. To overcome this limitation, we introduce the characteristic function of Cloud Model, which offer an alternative mathematical representation for modeling its stochastic processes. The characteristic function enables the implementation of regularization techniques and optimization strategies commonly employed in generative models. By deriving and leveraging the characteristic function, we can optimize the generative process despite the absence of an analytical PDF, thereby enhancing the overall performance of generative models by capturing complex data distributions more effectively.  

The characteristic function is a fundamental concept in probability theory, as it uniquely represents the distribution of real-valued random vectors in a concise manner. Its properties play a crucial role in simplifying theoretical derivations, particularly for complex probability distributions.

Consider the characteristic function of a probability measure $P$ on $\mathcal{R}^d$, given by  

\begin{equation*}
\begin{aligned}
    \Phi_P:\mathcal{R}^d \to \mathcal{C};\quad \boldsymbol{z} \mapsto \Phi_P(\boldsymbol{z}) = \mathbb{E}\left[e^{i \boldsymbol{z}^\top X}\right] = \int e^{i \boldsymbol{z}^\top x} P{\rm d}x,
\end{aligned}
\end{equation*}
where $\mathbb{E}\left[e^{i \boldsymbol{z}^\top X}\right]$ denotes the expectation of the complex exponential of the inner product between $\boldsymbol{z}$ and the random vector $X$, and the integral formulation expresses this expectation with respect to the probability measure $P$.  

Building on the sample generation process outlined in Algorithm~\ref{alg:FCG}, the stochastic generation mechanism of the cloud model can be derived. The generation of cloud droplets ($X$) in the CM follows a two-layer stochastic process:  

\begin{itemize}
\item[1)] Perturbation of entropy ($\text{En}$): $S \sim N(En, He^2)$, where $He$ quantifies the uncertainty associated with $En$.  
\item[2)] Generation of $X$ conditioned on $S$: $X |S \sim N(Ex, S^2)$, where $Ex$ denotes the mathematical expectation of $X$.  
\end{itemize}  

These processes reflect the hierarchical structure of uncertainty representation in the CM.

\begin{theorem}
\label{thm:theorem1}
Given a Cloud Model (CM) defined as $CM = \{Ex: expected value, En: entropy, He: hyper-entropy\}$, its characteristic function can be expressed as:  

\begin{equation} \label{E2}
\begin{aligned}
    \Phi_{X}(\boldsymbol{z}) = \frac{\exp\left(i\boldsymbol{z} \cdot Ex-\frac{\boldsymbol{z}^\top En^2 \boldsymbol{z}}{2(1 + \boldsymbol{z}^\top He^2 \boldsymbol{z})}\right)}{\sqrt{1 + \boldsymbol{z}^\top He^2 \boldsymbol{z}}} 
\end{aligned}
\end{equation}
\end{theorem}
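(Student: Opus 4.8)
The plan is to exploit the two-layer hierarchical structure of the cloud model and compute the characteristic function by iterated expectation (the tower property), conditioning first on the intermediate variable $S$ and then averaging over its distribution. Writing $\Phi_X(\boldsymbol{z}) = \mathbb{E}\left[e^{i\boldsymbol{z}^\top X}\right] = \mathbb{E}_S\left[\mathbb{E}\left[e^{i\boldsymbol{z}^\top X}\mid S\right]\right]$, I would first evaluate the inner conditional expectation. Since $X\mid S \sim N(Ex, S^2)$ is Gaussian, its conditional characteristic function is the standard Gaussian form, giving $\mathbb{E}\left[e^{i\boldsymbol{z}^\top X}\mid S=s\right] = \exp\left(i\boldsymbol{z}\cdot Ex - \tfrac{1}{2}s^2\,\boldsymbol{z}^\top\boldsymbol{z}\right)$. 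The factor $e^{i\boldsymbol{z}\cdot Ex}$ is deterministic and pulls outside the outer expectation.

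The second step is to take the expectation over $S \sim N(En, He^2)$. After factoring out the deterministic phase term, the remaining task reduces to evaluating $\mathbb{E}_S\left[\exp\left(-\tfrac{1}{2}(\boldsymbol{z}^\top\boldsymbol{z})\,S^2\right)\right]$, i.e. the expectation of a Gaussian exponentiated quadratic in $S$. Note that because the variance of $X$ enters as $S^2$, the sign of $S$ is immaterial, so integrating over the full real line (rather than restricting to $|S|$) is legitimate and does not alter the result.

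The main computation, and the step I expect to be the real obstacle, is this Gaussian integral $\int_{-\infty}^{\infty} \exp\left(-\tfrac{1}{2}cs^2\right)\frac{1}{\sqrt{2\pi He^2}}\exp\left(-\frac{(s-En)^2}{2He^2}\right)\,ds$ with $c = \boldsymbol{z}^\top\boldsymbol{z}$. I would evaluate it by collecting the two quadratics into a single quadratic in $s$, completing the square, and using the normalization of the resulting shifted Gaussian. Completing the square produces an effective variance of the form $He^2/(1 + c\,He^2)$ together with a leftover constant; carefully tracking the normalization constants then yields the prefactor $1/\sqrt{1 + c\,He^2}$ and the residual exponent $-\tfrac{c\,En^2}{2(1 + c\,He^2)}$. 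The delicate points are keeping the normalization factor $1/\sqrt{2\pi He^2}$ consistent through the change of the quadratic coefficient, and confirming convergence, which holds because $1 + \boldsymbol{z}^\top He^2\boldsymbol{z} > 0$ always.

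Finally, I would recombine the deterministic phase $e^{i\boldsymbol{z}\cdot Ex}$ with the evaluated integral, substitute $c = \boldsymbol{z}^\top\boldsymbol{z}$ back in, and match the quadratic-form notation $\boldsymbol{z}^\top En^2\boldsymbol{z}$ and $\boldsymbol{z}^\top He^2\boldsymbol{z}$, arriving at the claimed expression in Equation~\eqref{E2}. A sanity check I would run is the degenerate limit $He \to 0$, where $S$ becomes deterministic and $\Phi_X$ should collapse to the ordinary Gaussian characteristic function $\exp(i\boldsymbol{z}\cdot Ex - \tfrac{1}{2}\boldsymbol{z}^\top En^2\boldsymbol{z})$, confirming that the prefactor and exponent have been tracked correctly.
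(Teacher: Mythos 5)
Your proposal is correct and follows essentially the same route as the paper's proof: condition on $S$, use the standard Gaussian conditional characteristic function $\exp\left(i\boldsymbol{z}\cdot Ex-\tfrac{1}{2}s^2\boldsymbol{z}^\top\boldsymbol{z}\right)$, pull out the phase, and evaluate $\mathbb{E}_S\left[\exp\left(-\tfrac{1}{2}cs^2\right)\right]$ by completing the square in the Gaussian integral. Your explicit remark that the sign of $S$ is immaterial (addressing the $|s_i|$ in the forward cloud generator) and the $He\to 0$ sanity check are minor additions in rigor, not a different approach.
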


\begin{proof}  
For clarity, the following random variables are presented in scalar form. According to the definition of the characteristic function, we have:  

Given a fixed $S$, $X | S$ follows $N(Ex, S^2)$. Its conditional characteristic function is given by:  

\begin{equation*}  
\begin{aligned}  
    \mathbb{E}_{X|S}[e^{izX}] = \exp\left(iz \cdot Ex - \frac{1}{2} s^2 z^2 \right)  
\end{aligned}  
\end{equation*}  

Substituting the conditional characteristic function into the marginal expectation expression, we obtain:  

\begin{equation*}  
\begin{aligned}  
    \Phi_X(z) &= \mathbb{E}_{S} \left[ \exp\left(iz \cdot Ex - \frac{1}{2} s^2 z^2 \right) \right] \\  
    &= \exp(iz \cdot Ex) \cdot \mathbb{E}_{S} \left[ \exp\left(-\frac{1}{2}s^2 z^2 \right) \right]  
\end{aligned}  
\end{equation*}  

Since $S \sim N(En, He^2)$, the expectation term is computed as:  

\begin{equation*}  
\begin{aligned}  
    \mathbb{E}_{S} \left[ \exp\left(-\frac{1}{2}s^2 z^2 \right) \right] = \frac{\exp\left(-\frac{\text{En}^2 z^2}{2(1 + z^2 \text{He}^2)}\right)}{\sqrt{1 + z^2 \text{He}^2}}  
\end{aligned}  
\end{equation*}  

Substituting this result back into the expression, we obtain the characteristic function of the CM:  

\begin{equation*}  
\begin{aligned}  
    \Phi_X(z) = \frac{\exp\left(iz \cdot Ex-\frac{En^2 z^2}{2(1 + z^2 He^2)}\right)}{\sqrt{1 + z^2 He^2}}  
\end{aligned}  
\end{equation*}  

\end{proof}  

\begin{algorithm}[tb]
   \caption{Cloud Model Characteristic Function Auto-Encoder(CMCFAE)}
   \label{alg:CMCFAE}
\begin{algorithmic}
   \STATE {\bfseries Requires:} Characteristic function \( \Phi_{P_{\boldsymbol{Z}}} \) of the cloud model prior \( P_{\boldsymbol{Z}} \), kernel $k$, regularization coefficient $\lambda > 0$. Initialiaze the generator $\mathcal{G}$ of $\boldsymbol{W}$, encoder $Q_{\theta}$ and decoder $G_{\phi}$.
   \WHILE{$(\theta,\phi)$ not converged}
   \STATE Sample $\boldsymbol{X}$ from the training set;
   \STATE Sample $\boldsymbol{Z}$ from $Q_{\theta}(\boldsymbol{Z}|\boldsymbol{X})$;
   \STATE Sample $\boldsymbol{W}$ from the generator $\mathcal{G}$;
   \STATE Calculate $\mathcal{L}_{\text{CMCFAE}}(\theta,\phi)=\mathcal{L}(\boldsymbol{X},\boldsymbol{Z},\boldsymbol{W},\Phi_{P_{\boldsymbol{Z}}})$;
   \STATE Update $Q_{\theta}$ and $G_{\phi}$ by taking a gradient step towards the minimizer of $\mathcal{L}_{\text{CMCFAE}}(\theta,\phi)$.
   \ENDWHILE
\end{algorithmic}
\end{algorithm}

This characteristic function reflects the probabilistic properties of the cloud model in the characteristic function space, with the following key aspects:

\begin{itemize}
\item \textbf{Central Tendency} The exponential term $\exp(iz \cdot Ex)$ determines the central location of the distribution, controlled by $Ex$, which establishes that the expected value of cloud droplets is $Ex$.  

\item \textbf{Amplitude Decay} The amplitude factor $\frac{1}{\sqrt{1 + z^2 \, He^2}}$ decreases with increasing $z$, indicating the role of hyper-entropy $He$ in controlling the dispersion of the distribution. A larger $He$ results in a wider uncertainty range, leading to a more gradual variation of the characteristic function.  

\item \textbf{Width Adjustment} The term $-\frac{En^2 z^2}{2(1 + z^2 \, He^2)}$ characterizes the influence on the shape of the distribution induced by entropy $\text{En}$ and hyper-entropy $He$. $En$ represents the basic uncertainty range, while $He$ serves as a higher-order modulator, governing the decay rate of the distribution tail.  
\end{itemize}

Based on the preceding analysis, the characteristic function of CM fully encapsulates the central tendency, uncertainty range, and distribution’s dynamic adjustment mechanisms, laying the groundwork for deeper theoretical investigations and practical applications of the cloud model.

\subsection{Cloud Model Characteristic Function Auto-Encoder}

Inspired by prior work on Wasserstein Auto-Encoders employing MMD-based regularization (WAE-MMD), we propose a novel generative model\textemdash \textbf{Cloud Model Characteristic Function Auto-Encoder (CMCFAE)}. The key idea is to integrate the cloud model into the WAE framework and leverage its characteristic function to regularize the latent space, thereby enabling more accurate modeling of complex distributions. This approach enhances the ability of WAE to align with the true data distribution while mitigating the limitations of conventional divergence measures (cf. also Figure~\ref{fig:CMCFAE}).

The characteristic function is incorporated into the computation of the MMD metrics\citep{bruck2024generative}. In contrast to the kernel-based MMD computation\citep{tolstikhin2017wasserstein}, this method offers greater flexibility, enabling data characteristics to be represented by distributions more suitable for the specific context.

\textbf{Wasserstein Auto-Encoder (WAE)} \citep{tolstikhin2017wasserstein} introduces a generative model based on an auto-encoder, which consists of a deterministic decoder $G$ and a potentially stochastic encoder $Q$. The core idea of this model is to minimize the Wasserstein distance $D_{\text{WAE}}(P_X, P_G)$ between the data distribution and the data generated by the decoder of the model. The formula is defined as follows:
\begin{equation*}
\begin{aligned}
    D_{\text{WAE}}(P_X, P_G) &= \inf_{Q(Z|X) \in \mathcal{Q}} \mathbb{E}_{P_X} \mathbb{E}_{Q(Z|X)} \big[d(X, X')\big] 
    + \lambda \cdot D_Z(Q_Z, P_Z),
\end{aligned}
\end{equation*}
where $\mathcal{Q}$ denotes a nonparametric family of probabilistic encoders, $D_Z$ is a general divergence measure between $Q_Z$ and $P_Z$, and $\lambda > 0$ is a hyperparameter. In WAE-MMD, $D_Z$ is computed based on the Maximum Mean Discrepancy (MMD). For a positive-definite reproducing kernel $k : \mathcal{Z} \times \mathcal{Z} \to \mathcal{R}$, the MMD is given by:
\begin{equation*}
\begin{aligned}
    D_Z(Q_Z, P_Z) &= \text{MMD}_k(Q_Z, P_Z) 
    = \left\| \int_{\mathcal{Z}} k(z, \cdot) \, {\rm d}P_Z(z) 
    - \int_{\mathcal{Z}} k(z, \cdot) \, {\rm d}Q_Z(z) \right\|_{\mathcal{H}_k},
\end{aligned}
\end{equation*}
where $\mathcal{H}_k$ denotes the reproducing kernel Hilbert space (RKHS) of real-valued functions mapping $\mathcal{Z}$ to $\mathcal{R}$. If $k$ is characteristic, $\text{MMD}_k$ defines a metric and may serve as a divergence measure.

In WAE-MMD \citep{tolstikhin2017wasserstein}, the computation of MMD depends on the sampling process of the prior distribution $P_Z$. When $k$ is a translation-invariant kernel, \citep{sriperumbudur2010hilbert} introduces a method to compute MMD directly from the characteristics of the prior distribution, bypassing the need for its sampling process. In this approach, $\text{MMD}_k(P_1, P_2)$ can be formulated as: 

\begin{equation*}
\begin{aligned}
    \text{MMD}_k(P_1, P_2) = \left( \mathbb{E} \left\|\Phi_{P_1}(\boldsymbol{W}) - \Phi_{P_2}(\boldsymbol{W})\right\|_2^2 \right)^{1/2}
\end{aligned}
\end{equation*}

Therefore, when $k$ is a translation-invariant kernel, $\text{MMD}_k(P_1, P_2)$ can be interpreted as the expected distance between the characteristic functions $\Phi_{P_1}$ and $\Phi_{P_2}$, evaluated at a random point $\boldsymbol{W}$.

Building on the foundation established by \citep{sriperumbudur2010hilbert, bruck2024generative} proposed a more computationally efficient version of MMD. In the context of the generative model within the WAE framework, MMD measures the distance between the latent space distribution of the encoder $Q_Z$ and the prior distribution $P_Z$. The specific formula is given by:

\begin{equation*}
\begin{aligned}
    &\text{MMD}_k(Q_Z, P_Z) 
    = \mathbb{E}_{\boldsymbol{W}} \left[ \left\| n^{-1} \sum_{i=1}^n \exp(i \boldsymbol{W}^\top \boldsymbol{Z}_i) - \Phi_{P_Z}(\boldsymbol{W}) \right\|_2^2 \right]^{1/2}
\end{aligned}
\end{equation*}

Since $Q_Z$ is generally inaccessible, we must rely on empirical approximations of $Q_Z$. To further simplify the computation, we obtain:
\begin{equation*}
\begin{aligned}
    (\text{MMD}_k(Q_Z, P_Z))^2 = C_Q + C_{QP} + C_P,
\end{aligned}
\end{equation*}
where $C_Q := \frac{1}{n^2} \mathbb{E}_{\boldsymbol{W}} \left[ \sum_{i,j=1}^{n} \exp(i \boldsymbol{W}^\top (\boldsymbol{Z}_i - \boldsymbol{Z}_j)) \right]$ represents the feature embedding difference of the empirical distribution, and $C_{QP} := -\frac{2}{n} \mathbb{E}_{\boldsymbol{W}} \left[ \sum_{i=1}^{n} \exp(-i \boldsymbol{W}^\top \boldsymbol{Z}_i) \Phi_P(\boldsymbol{W}) \right]$ represents the feature interaction term between the empirical and target distributions. Note that $C_P := \mathbb{E}_{\boldsymbol{W}} \left[ \Phi_P(\boldsymbol{W}) \Phi_P(\boldsymbol{W})^\top \right]$ is a constant that depends solely on $P$, representing the feature embedding constant of the target distribution.

It is typically not feasible to assume that $\mathbb{E}_{\boldsymbol{W}} \left[ \sum_{i=1}^{n} \exp(-i \boldsymbol{W}^\top \boldsymbol{Z}_i) \Phi_P(\boldsymbol{W}) \right]$ can be computed in closed form. Therefore, we proceed by approximating $\text{MMD}_k(Q_Z, P_Z)$ using these approximations:

$$
C_{QP} \approx - 2 \Re\left(\frac{1}{n m} \sum_{i=1}^n \sum_{l=1}^m  \exp(-i \boldsymbol{W}_l^\top \boldsymbol{Z}_i) \Phi_P(\boldsymbol{W}_l)\right),
$$
and
$$
C_Q \approx \frac{1}{m n (n - 1)} \sum_{i,j = 1, i \neq j}^n \sum_{l=1}^m \exp(i \boldsymbol{W}_l^\top (\boldsymbol{Z}_i - \boldsymbol{Z}_j)),
$$
where $\Re(z)$ denotes the real part of the complex number $z$.

Therefore, we can derive the optimization term $\Gamma(\boldsymbol{Y}, \boldsymbol{W}, \Phi_P)$, which is approximately equivalent to $\text{MMD}_k(Q_Z, P_Z)$. Specifically, it is defined as:

$$
\Gamma(\boldsymbol{Z}, \boldsymbol{W}, \Phi_P) := C_{Q} + C_{QP},
$$
where $C_{Q}$ and $C_{QP}$ represent the previously mentioned approximate values. Since $\Gamma(\boldsymbol{Z}, \boldsymbol{W}, \Phi_P)$ lacks a constant term $C_P$ compared to $(\text{MMD}_k(Q_Z, P_Z))^2$, this term may become negative during the optimization process.

The loss function of the CMCFAE model, denoted as $\mathcal{L}_{\text{CMCFAE}}$, is defined as:

\begin{equation} \label{E3}
\begin{aligned}
    \mathcal{L}_{\text{CMCFAE}} &:= \underbrace{\inf_{Q(\boldsymbol{Z} \mid \boldsymbol{X}) \in \mathcal{Q}} \mathbb{E}_{P_{\boldsymbol{X}}} \mathbb{E}_{Q(\boldsymbol{Z} \mid \boldsymbol{X})} \left[ d(\boldsymbol{X}, G(\boldsymbol{Z})) \right]}_{\text{data reconstruction error}} 
        + \lambda \cdot \underbrace{\Gamma(\boldsymbol{Z}, \boldsymbol{W}, \Phi_{P_{\boldsymbol{Z}}})}_{\text{MMD metric for} \, \boldsymbol{Z}},
\end{aligned}
\end{equation}
where $\Gamma(\boldsymbol{Z}, \boldsymbol{W}, \Phi_{P_{\boldsymbol{Z}}})$ represents an optimization term that is equivalent to the MMD metric. The prior distribution $P_{\boldsymbol{Z}}$ is modeled using the cloud model, where $\Phi_{P_{\boldsymbol{Z}}}$ denotes the characteristic function of the cloud model, as specified in Equation~\eqref{E2}. In line with standard practices in generative modeling, we employ deep neural networks to parameterize both the encoder $Q$ and the decoder $G$.

Based on the loss function in Equation~\eqref{E3}, we can construct the Cloud Model Characteristic Function Auto-Encoder, as shown in Algorithm~\ref{alg:CMCFAE}.

\begin{table*}[h]
\centering
\caption{\textbf{Performance of CMCFAE compared to other baselines on MNIST, FashionMNIST, CIFAR-10, and CelebA}. In the CMCFAE-FP model, the prior parameters $P_{\boldsymbol{Z}}$ are fixed across different dimensions, specifically $Ex = 0.0$, $En = 1.0$, and $He = 0.1$. In contrast, in the CMCFAE-VP model, the prior parameters $P_{\boldsymbol{Z}}$ vary across different dimensions, specifically $Ex \sim \text{Uniform}[-10.0, 10.0]$, $En \sim \text{Uniform}[1.0, 5.0]$, and $He \sim \text{Uniform}[0.1, 1.0]$. The best results are highlighted in green, and the second-best in light green.}
\label{tal: Quantitative comparisons-table}
\resizebox{\textwidth}{!}{
\begin{tabular}{lcccccccccccc}
\hline
\multirow{2}{*}{\textbf{Model}} & \multicolumn{3}{c}{\textbf{MNIST}} & \multicolumn{3}{c}{\textbf{FashionMNIST}} & \multicolumn{3}{c}{\textbf{CIFAR-10}} & \multicolumn{3}{c}{\textbf{CelebA}}\\ 

\cmidrule(r){2-4} \cmidrule(r){5-7} \cmidrule(r){8-10} \cmidrule(r){11-13}
 & $\lambda$ & \makecell[c]{\textbf{Rec.} \\ \textbf{Error {\textcolor{red}{↓}}}} & \makecell[c]{\textbf{FID} \\ \textbf{Score {\textcolor{red}{↓}}}} & $\lambda$ & \makecell[c]{\textbf{Rec.} \\ \textbf{Error {\textcolor{red}{↓}}}} & \makecell[c]{\textbf{FID} \\ \textbf{Score {\textcolor{red}{↓}}}} & $\lambda$ & \makecell[c]{\textbf{Rec.} \\ \textbf{Error {\textcolor{red}{↓}}}}  & \makecell[c]{\textbf{FID} \\ \textbf{Score {\textcolor{red}{↓}}}} & $\lambda$ & \makecell[c]{\textbf{Rec.} \\ \textbf{Error {\textcolor{red}{↓}}}}  & \makecell[c]{\textbf{FID} \\ \textbf{Score {\textcolor{red}{↓}}}}\\ 
\hline
AE   & $-$     & $11.19$    & $52.74$    & $-$   & $9.87$      & $81.98$    & $-$       & \cellcolor[HTML]{B6D7A8}$\textbf{24.67}$           & $269.09$   & $-$   & $86.41$   & $353.50$   \\       
VAE  & $-$     & $18.79$    & $40.47$    & $-$    & $15.41$   & $64.98$   & $-$          & $63.77$        
 & $172.39$   & $-$   & $110.87$   & $60.85$\\ 
WAE-MMD   & $1.0$     & $11.14$       & $27.65$    & $100.0$    & $10.01$           & $58.79$    & $1.0$         & \cellcolor[HTML]{D9EAD3}$25.04$     & $129.37$   & $100.0$   & $86.38$   & $51.51$\\ 
SWAE  & $1.0 $    & $10.99$    & $29.76$    & $100.0$    & $10.56$    & $54.48$   & $1.0$       & $25.42$     & $141.91$   & $100.0$   & $85.97$   & $53.85$\\ 
CWAE      & $1.0$     & $11.25$        & \cellcolor[HTML]{D9EAD3}$23.63$     & $10.0$   & $10.36$           & \cellcolor[HTML]{D9EAD3}$49.49$    & $1.0$        & $25.93$   & \cellcolor[HTML]{B6D7A8}$\textbf{120.02}$   & $5.0$   & $86.89$   & $49.69$\\ 
CMCFAE-FP      & $10.0$      & \cellcolor[HTML]{D9EAD3}$9.23$       & \cellcolor[HTML]{B6D7A8}$\textbf{22.03}$    & $10.0$    & \cellcolor[HTML]{B6D7A8}$\textbf{8.98}$           & \cellcolor[HTML]{B6D7A8}$\textbf{35.54}$   & $10.0$        & $25.38$     & \cellcolor[HTML]{D9EAD3}$123.23$   & $100.0$   & \cellcolor[HTML]{D9EAD3}$72.23$   & \cellcolor[HTML]{D9EAD3}$44.95$\\ 
CMCFAE-VP      & $10.0$      & \cellcolor[HTML]{B6D7A8}$\textbf{9.14}$       & $24.45$    & $10.0$    & \cellcolor[HTML]{D9EAD3}$9.26$           & $51.15$   & $10.0$        & $26.33$     & $127.48$   & $100.0$   & \cellcolor[HTML]{B6D7A8}$\textbf{69.81}$   & \cellcolor[HTML]{B6D7A8}$\textbf{43.87}$\\ 
\hline
\end{tabular}
}
\end{table*}

\begin{figure*}[!htb]
\begin{center}
\centerline{\includegraphics[width=\textwidth]{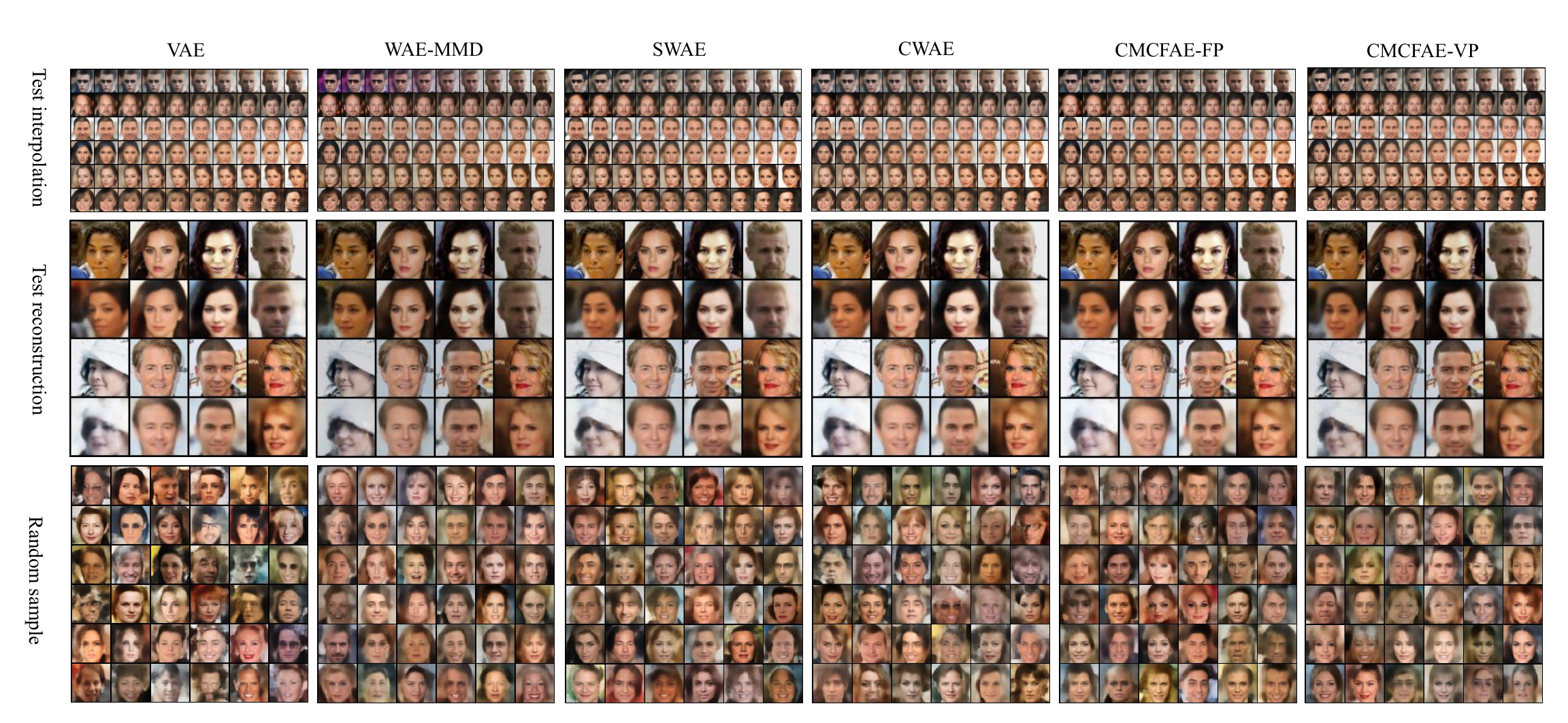}}
\caption{The test results for the VAE, WAE-MMD, SWAE, CWAE, and CMCFAE models on the CelebA dataset include Test Interpolation, Test Reconstruction, and Random Samples. Notably, the odd-numbered rows in the Test Reconstruction results correspond to the respective ground truth input data.}
\label{celeba-qualitative}
\end{center}
\end{figure*}

\section{Experiments} \label{sec4:Experiments}
In this section, we empirically evaluate the proposed CMCFAE model. We compare the proposed CMCFAE model with WAE-MMD\citep{tolstikhin2017wasserstein}, SWAE\citep{2018Sliced}, and CWAE\citep{knop2020cramer} on standard benchmarks, including the MNIST, FashionMNIST, CIFAR-10, and CelebA datasets.

\textbf{Experiment Setup}.  
In the experiments, we utilized the characteristic function form of the cloud model prior distribution $P_{\boldsymbol{Z}}(\boldsymbol{Z})$, expressed as $\Phi_{P_{\boldsymbol{Z}}}(\boldsymbol{Z}; \boldsymbol{Ex}, \boldsymbol{En}, \boldsymbol{He})$ in the latent space $\mathcal{Z}$, where $\boldsymbol{Ex} = \boldsymbol{0}$, and $\boldsymbol{En}$ and $\boldsymbol{He}$ vary across different datasets. For data points, we use the squared error $d(y, \hat{y}) = \lVert y - \hat{y} \rVert_2^2$. We employ a convolutional deep neural network architecture to implement the encoder mapping $Q_{\theta}: \mathcal{X} \rightarrow \mathcal{Z}$ and the decoder mapping $G_{\phi}: \mathcal{Z} \rightarrow \mathcal{X}$. In the different experiments, we tested various values of $\lambda$, specifically $\lambda = 1.0$, $\lambda = 10.0$, and $\lambda = 100.0$.

\subsection{Quantitative tests}
To quantitatively compare CMCFAE with other models, we adopted the experimental setup and neural network architecture described in \citep{knop2020cramer}. We use reconstruction error and the Fréchet Inception Distance (FID) \citep{heusel2017gans} as evaluation metrics.

We observed that, except for the CIFAR-10 dataset where our model did not achieve the best performance, it consistently outperformed all other models on the remaining datasets. Additionally, the prior $P_{\boldsymbol{Z}}$ was found to significantly impact the experimental outcomes, with its influence varying across different datasets. Detailed experimental results are shown in Table~\ref{tal: Quantitative comparisons-table}.

\subsection{Qualitative tests}
The quality of generative models can be evaluated by examining the generated samples, interpolation between samples in the latent space, and random sampling from the reconstructed samples. In Figure~\ref{celeba-qualitative}, we present a comparison of CMCFAE with other methods, using the same network architecture as WAE-MMD and CWAE. The first row shows interpolation results between two random samples from the test set. The second row tests the reconstruction of random samples from the test set. The third row demonstrates the reconstruction of samples using random values drawn from the prior distribution as latent variables. The experiments indicate no perceptible difference between CMCFAE, WAE-MMD, CWAE, and SWAE.

In the next experiment, we conducted a focused evaluation of the latent space distribution under different priors. The comparison method, WAE, used a standard normal distribution as the prior, while we used a cloud model as the prior. We used the t-SNE\citep{tu2018unified} dimensionality reduction algorithm to map the latent space to a two-dimensional space for visualization, with data labels distinguished by different colors. The experiment was conducted on the MNIST dataset, and the specific results are shown in Figure~\ref{latent_space}. In the latent space of WAE, which uses a standard normal distribution as the prior, the boundaries between data points representing the digits 4, 7, and 9 in the MNIST dataset are unclear, and the overlapping of data points is more pronounced compared to Figure~\ref{latent_space}(b). This experiment further validates that the cloud model, as a generalized normal distribution for describing the latent space, more accurately reflects the true conditions in complex networks compared to the standard normal distribution\citep{li2004study}.

\begin{figure*}[!htb]
\begin{center}
\centerline{\includegraphics[width=12cm]{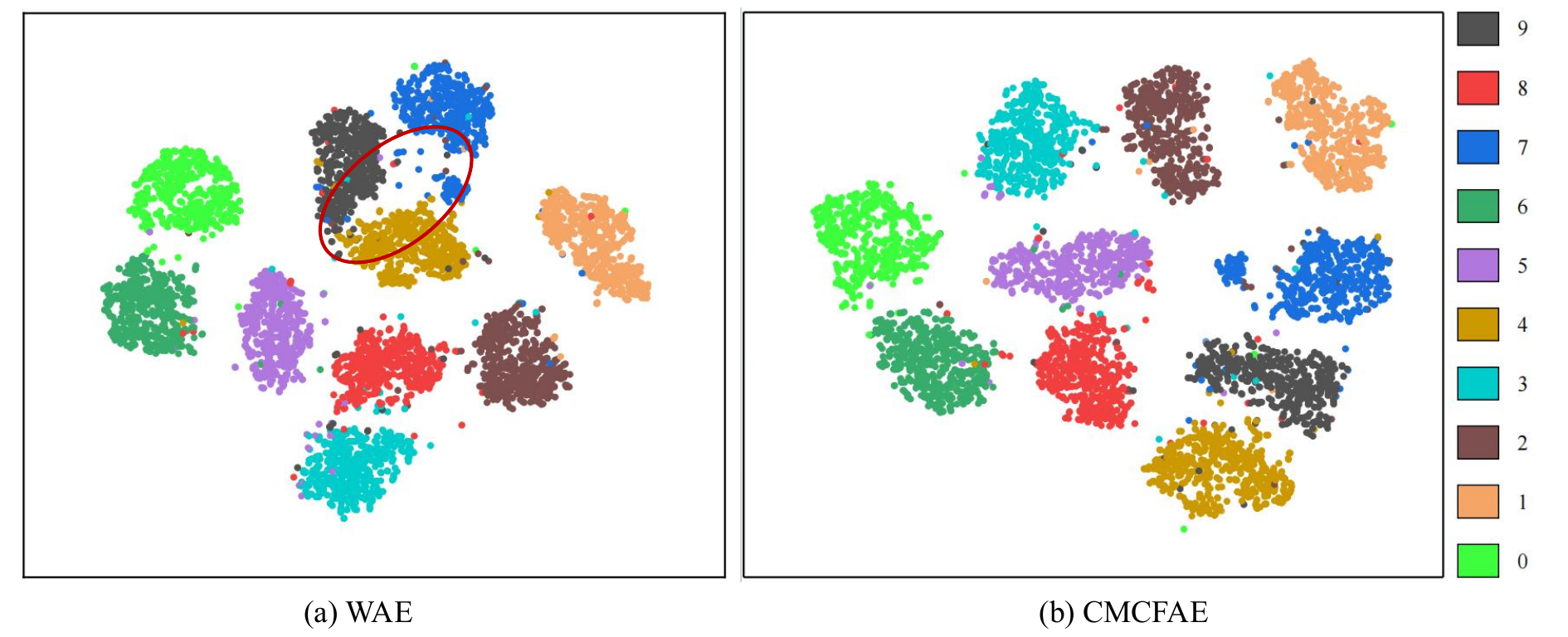}}
\caption{The latent space distributions of WAE and CMCFAE based on the MNIST dataset are compared. Both models utilize the same network architecture. WAE uses a standard normal distribution $N(0, I)$ as the prior, while CMCFAE employs a cloud model (which can be understood as a generalized normal distribution with three numerical characteristics: $Ex = 0$, $En = 1$, and $He = 0.1$) as the prior.}
\label{latent_space}
\end{center}
\end{figure*}

\section{Conclusion}
In this paper, we introduced the Cloud Model Characteristic Function Auto-Encoder (CMCFAE), a novel generative framework that integrates the cloud model with Maximum Mean Discrepancy (MMD). By deriving the characteristic function of the cloud model, thereby circumventing the limitations imposed by the lack of an analytical probability density function, we provide a rigorous method for regularizing the latent space in generative models. Our formulation, which includes a well-defined loss function and an efficient algorithmic process, enables CMCFAE to capture complex latent distributions more accurately than conventional approaches. Extensive quantitative and qualitative evaluations on benchmark datasets, including MNIST, FashionMNIST, CIFAR-10, and CelebA, demonstrate that CMCFAE achieves superior results in reconstruction quality, latent space structuring, and sample diversity. This work represents the first integration of cloud model theory with MMD through the derivation of its characteristic function, thereby opening new avenues for enhancing autoencoder-based generative models and expanding the applicability of cloud model theory in real-world scenarios. We anticipate that the proposed approach will inspire further advancements in generative modeling and facilitate more robust representation learning.

\begin{appendices}

\appendix
\section{Proofs for Section~\ref{sec:Methodology}}
\subsection{Proof of Theorem~\ref{thm:theorem1}} 

The calculation process for $\mathbb{E}_{S} \left[ \exp\left(-\frac{1}{2}s^2 z^2 \right) \right]$ is provided below:

\begin{equation*}
\begin{aligned}
    \mathbb{E}_S[\exp(-s^2 z^2 / 2)] = \int_{-\infty}^{+\infty} \exp\left(-\frac{s^2 z^2}{2}\right) f(s) {\rm d}s 
\end{aligned}
\end{equation*}
where \( f(s) \) is the probability density function of \( s \). Given \( S \sim N(En, He^2) \), its probability density function is:
\begin{equation*}
\begin{aligned}
    f(s) = \frac{1}{\sqrt{2\pi He^2}} \exp\left(-\frac{(s - En)^2}{2 \text{He}^2}\right)
\end{aligned}
\end{equation*}

Substituting \( f(s) \), we obtain:
\begin{equation*}
\begin{aligned}
    &\mathbb{E}_S[\exp(-s^2 z^2 / 2)]\\ &= \int_{-\infty}^{+\infty} \exp\left(-\frac{s^2 z^2}{2}\right) \cdot \frac{1}{\sqrt{2\pi He^2}} \exp\left(-\frac{(s - En)^2}{2 He^2}\right) {\rm d}s \\
    &= \frac{1}{\sqrt{2\pi He^2}} \int_{-\infty}^{+\infty} \exp\left(-\frac{s^2 z^2}{2} - \frac{(s - En)^2}{2 He^2}\right) {\rm d}s \\
    &= \frac{1}{\sqrt{2\pi He^2}} \exp\left(-\frac{En^2 z^2}{2(1 + z^2 He^2)}\right) \int_{-\infty}^{+\infty} \exp\left(-\frac{1}{2} \left(z^2 + \frac{1}{He^2}\right) \left(s - \frac{En / He^2}{z^2 + 1 / He^2}\right)^2\right) {\rm d}s
\end{aligned}
\end{equation*}

The integral part is calculated as:
\begin{equation*}
\begin{aligned}
    \int_{-\infty}^{+\infty} \exp\left(-\frac{1}{2} \left(z^2 + \frac{1}{He^2}\right) \left(s - \frac{En / He^2}{z^2 + 1 / He^2}\right)^2\right) {\rm d}s = \sqrt{\frac{2 \pi}{z^2 + \frac{1}{\text{He}^2}}}
\end{aligned}
\end{equation*}

After simplification, the final result is obtained as:
\begin{equation*}
\begin{aligned}
    \mathbb{E}_S[\exp(-s^2 z^2 / 2)] = \frac{1}{\sqrt{1 + z^2 He^2}} \exp\left(-\frac{En^2 z^2}{2(1 + z^2 He^2)}\right)
\end{aligned}
\end{equation*}

\section{Experimental Details}
I now report the key details of all experiments. All our experiments are built upon the open-source baseline codebase\citep{tolstikhin2017wasserstein}.

\subsection{Datasets} 
We employed the following datasets to evaluate the performance of CMCFAE and other methods from both quantitative and qualitative perspectives:

\textbf{MNIST}\citep{simard2003best} is a benchmark dataset for handwritten digit recognition, consisting of 70,000 grayscale images of digits (0–9). It is divided into 60,000 training images and 10,000 test images, with each image having a resolution of 28×28 pixels. The labels are provided as integers corresponding to the digit in each image. Due to its simplicity and broad applicability, MNIST serves as a standard dataset for evaluating machine learning models in classification tasks, including those based on generative models and autoencoders.

\textbf{FashionMNIST}\citep{xiao2017fashion} serves as a more challenging alternative to MNIST, designed for benchmarking machine learning models in image classification tasks. It consists of 70,000 grayscale images of fashion items across 10 categories, such as T-shirts, trousers, and bags, all at a resolution of 28×28 pixels. The dataset is split into 60,000 training images and 10,000 test images. Unlike MNIST’s handwritten digits, FashionMNIST represents real-world object categories, making it suitable for testing models' ability to generalize to more complex and varied data distributions.

\textbf{CIFAR-10}\citep{krizhevsky2009learning} is a widely used benchmark for image classification and generative modeling tasks. It consists of 60,000 color images, each with a resolution of 32×32 pixels, and evenly distributed across 10 classes, including airplanes, automobiles, birds, cats, and dogs. The dataset is divided into 50,000 training images and 10,000 test images. Each image contains a single object, centered and labeled with its corresponding class. CIFAR-10 is known for its moderate complexity and is frequently employed to evaluate the performance of deep learning models, particularly those involving convolutional architectures.

\textbf{CelebA}\citep{liu2015deep} is a large-scale facial attributes dataset containing more than 200,000 images of celebrity faces. Each image is annotated with 40 binary attributes, such as gender, age, and hairstyle, as well as five landmark points for alignment. CelebA is widely used in tasks such as facial attribute classification, face detection, and generative modeling. The dataset's high variability in pose, lighting, and expression provides a robust benchmark for evaluating models' ability to handle diverse, real-world face data distributions.

\subsection{Baselines} 
In our experiments, we evaluated the quantitative and qualitative performance of CMCFAE on image generation tasks by comparing it with several selected baselines, ensuring both a fair comparison and comprehensive coverage of different methods reported in the literature. Specifically, for the quantitative evaluation, we selected AE and VAE\citep{kingma2013auto} as baselines, along with WAE\citep{tolstikhin2017wasserstein} and its optimized variants SWAE\citep{2018Sliced} and CWAE\citep{knop2020cramer}, which are based on the WAE framework. For the qualitative evaluation, the AE model was excluded as a baseline.

\subsection{Hyperparameter Tuning Strategy} 
As detailed in Section~\ref{sec4:Experiments}, the value of $\lambda$ in the loss function varies across datasets. Specifically, $\lambda$ is set to 10 for MNIST, FashionMNIST, and CIFAR-10, whereas it is set to 100 for CelebA. In the experiments, the primary distinction between the CMCFAE-FP and CMCFAE-VP models lies in the priors they employ. Specifically, CMCFAE-FP uses a fixed cloud model prior with parameters $Ex=0$, $En=1$, and $He=0.1$, while CMCFAE-VP utilizes a prior where $Ex \sim \text{Uniform}[-10.0, 10.0]$, $En \sim \text{Uniform}[1.0, 5.0]$, and $He \sim \text{Uniform}[0.1, 1.0]$.

\subsection{Training Details} 
For \textbf{MNIST} and \textbf{FashionMNIST}, we used a batch size of 100 and trained the model for 200 epochs. The encoder-decoder pair was optimized using the Adam optimizer, initialized with a learning rate of $\alpha = 10^{-3}$, a first-order momentum of $\beta_1 = 0.5$, and a second-order momentum of $\beta_2 = 0.999$.

Both the encoder and decoder utilized fully convolutional architectures with 4x4 convolutional filters. 

Encoder Architecture:
\begin{equation*}
\begin{aligned}
    x \in \mathcal{R}^{28 \times 28} &\rightarrow \text{Conv}_{128} \rightarrow \text{BN} \rightarrow \text{ReLU} \\ 
    &\rightarrow \text{Conv}_{256} \rightarrow \text{BN} \rightarrow \text{ReLU} \\
    &\rightarrow \text{Conv}_{512} \rightarrow \text{BN} \rightarrow \text{ReLU} \\
    &\rightarrow \text{Conv}_{1024} \rightarrow \text{BN} \rightarrow \text{ReLU} \\
    &\rightarrow \text{FC}_8
\end{aligned}
\end{equation*}

Decoder Architecture:
\begin{equation*}
\begin{aligned}
    z \in \mathcal{R}^8 &\rightarrow \text{FC}_{7 \times 7 \times 1024} \\
    &\rightarrow \text{FSConv}_{512} \rightarrow \text{BN} \rightarrow \text{ReLU} \\
    &\rightarrow \text{FSConv}_{256} \rightarrow \text{BN} \rightarrow \text{ReLU} \\
    &\rightarrow \text{FSConv}_1 \rightarrow \text{Sigmoid}
\end{aligned}
\end{equation*}

For \textbf{CIFAR-10}, we employed a mini-batch size of 100 and trained the model for 300 epochs. The learning rate was initialized at $\alpha = 10^{-3}$ for both the encoder and decoder pairs, with the Adam optimizer's first-order momentum set to $\beta_1 = 0.5$ and second-order momentum set to $\beta_2 = 0.999$.

The encoder utilizes a fully convolutional architecture with 2x2 convolutional filters. In the decoder, three fractional-strided convolutions utilize 3x3 convolutional filters in a fully convolutional architecture, while one transposed convolution layer employs a 4x4 convolutional filter in a fully convolutional setup.

Encoder Architecture:
\begin{equation*}
\begin{aligned}
    x \in \mathcal{R}^{32 \times 32 \times 3} &\rightarrow \text{Conv}_{32} \rightarrow \text{ReLU} \\ 
    &\rightarrow \text{Conv}_{32} \rightarrow \text{ReLU} \\
    &\rightarrow \text{Conv}_{32} \rightarrow \text{ReLU} \\
    &\rightarrow \text{Conv}_{32} \rightarrow \text{ReLU} \\
    &\rightarrow \text{FC}_{128} \rightarrow \text{ReLU} \\
    &\rightarrow \text{FC}_{64}
\end{aligned}
\end{equation*}

Decoder Architecture:
\begin{equation*}
\begin{aligned}
    z \in \mathcal{R}^{64} &\rightarrow \text{FC}_{128} \rightarrow \text{ReLU} \\
    &\rightarrow \text{FC}_{32 \times 16 \times 16} \rightarrow \text{ReLU}\\
    &\rightarrow \text{FSConv}_{32} \rightarrow \text{ReLU} \\
    &\rightarrow \text{FSConv}_{32} \rightarrow \text{ReLU} \\
    &\rightarrow \text{FSConv}_{32} \rightarrow \text{ReLU} \\
    &\rightarrow \text{FSConv}_{32} \rightarrow \text{ReLU} \rightarrow \text{Sigmoid}
\end{aligned}
\end{equation*}

For \textbf{CelebA}, we employed a mini-batch size of 100 and trained the model for 250 epochs. The learning rate was initialized at $\alpha = 10^{-3}$, with the Adam optimizer's first-order momentum set to $\beta_1 = 0.5$ and second-order momentum set to $\beta_2 = 0.999$.

Both the encoder and decoder utilized fully convolutional architectures with 4x4 convolutional filters.

Encoder Architecture:
\begin{equation*}
\begin{aligned}
    x \in \mathcal{R}^{64 \times 64 \times 3} &\rightarrow \text{Conv}_{128} \rightarrow \text{BN} \rightarrow \text{ReLU} \\ 
    &\rightarrow \text{Conv}_{256} \rightarrow \text{BN} \rightarrow \text{ReLU} \\
    &\rightarrow \text{Conv}_{512} \rightarrow \text{BN} \rightarrow \text{ReLU} \\
    &\rightarrow \text{Conv}_{1024} \rightarrow \text{BN} \rightarrow \text{ReLU} \\
    &\rightarrow \text{FC}_{64}
\end{aligned}
\end{equation*}

Decoder Architecture:
\begin{equation*}
\begin{aligned}
    z \in \mathcal{R}^{64} &\rightarrow \text{FC}_{8 \times 8 \times 1024} \\
    &\rightarrow \text{FSConv}_{512} \rightarrow \text{BN} \rightarrow \text{ReLU} \\
    &\rightarrow \text{FSConv}_{256} \rightarrow \text{BN} \rightarrow \text{ReLU} \\
    &\rightarrow \text{FSConv}_{128} \rightarrow \text{BN} \rightarrow \text{ReLU} \\
    &\rightarrow \text{FSConv}_3
\end{aligned}
\end{equation*}

Here, $\text{Conv}_k$ denotes a convolution with $k$ filters, $\text{FSConv}_k$ represents a fractional strided convolution using $k$ filters, BN refers to batch normalization, ReLU stands for the Rectified Linear Unit, Sigmoid represents the logistic sigmoid function, and $\text{FC}_k$ denotes a fully connected mapping to $\mathcal{R}^k$.

\section{Additional Results}
The training dynamics of MNIST, FashionMNIST, and CIFAR-10, as shown in Figure~\ref{loss_all}.

\begin{figure*}[ht]
\begin{center}
\centerline{\includegraphics[width=\textwidth]{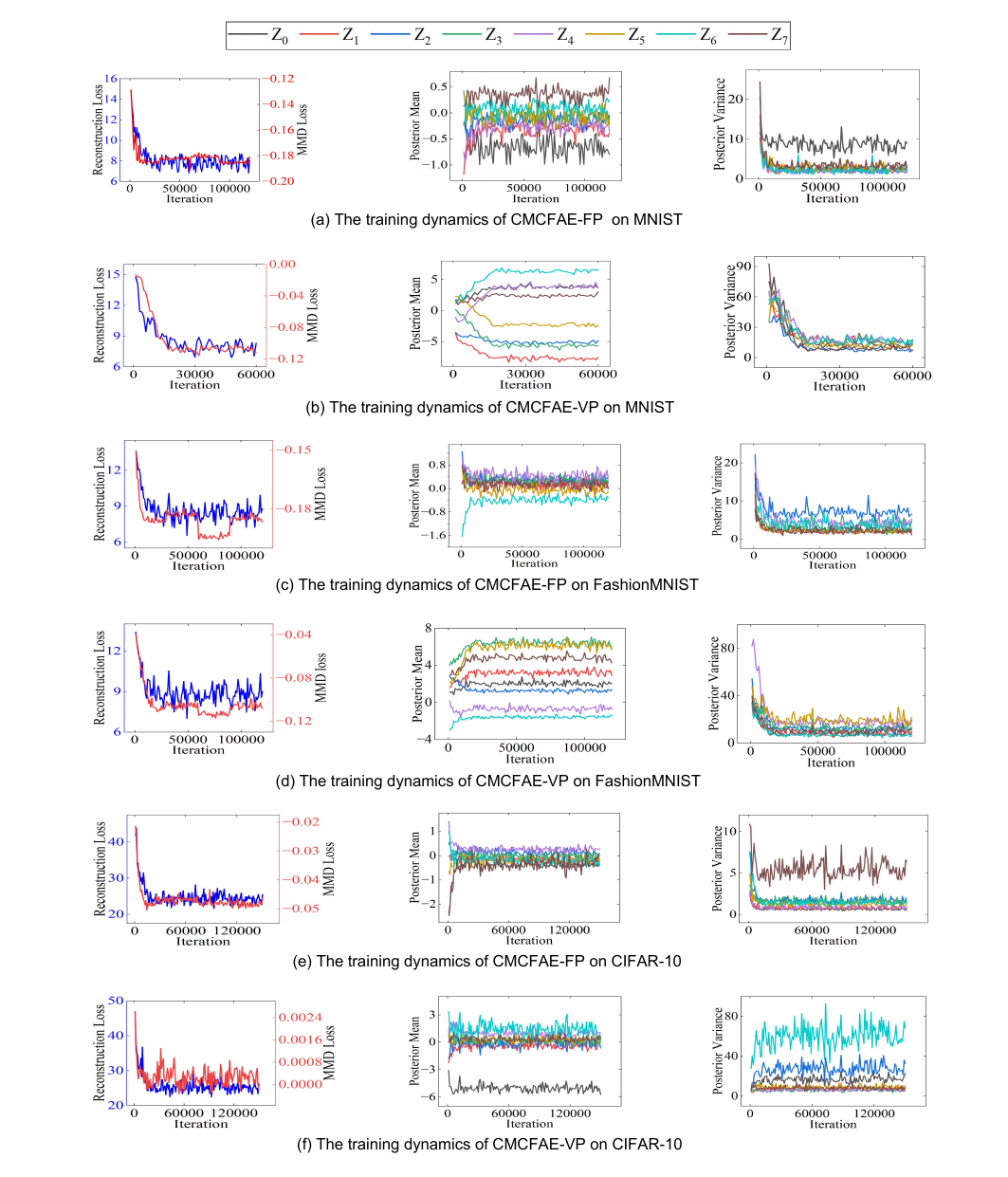}}
\caption{The training dynamics of CMCFAE-FP and CMCFAE-VP on MNIST, FashionMNIST, and CIFAR-10, focusing on reconstruction loss, MMD loss, and the variations in the posterior distributions. Specifically, the first column illustrates the evolution of reconstruction loss and MMD loss, while the second and third columns show the progression of the mean and variance of the posterior distributions, respectively. It is important to note that a negative MMD loss during training is a typical occurrence, as discussed in Section~\ref{sec:Methodology}, where the constant term $C_P$ is excluded from the MMD loss calculation. For CIFAR-10, the dimensionality of the latent variables is 64. For clarity, only the progression of the mean and variance of the first 8 dimensions of the latent variables during training is presented.}
\label{loss_all}
\end{center}
\end{figure*}




\end{appendices}

\bibliography{sn-bibliography}

\end{document}